
\documentclass{article}

\usepackage{times}
\usepackage{graphicx} 
\usepackage{subfigure} 

\usepackage{natbib}

\usepackage{algorithm}
\usepackage{algorithmic}

\usepackage{hyperref}


\usepackage[accepted]{icml2014}

\usepackage{amsmath}
\usepackage{amssymb}

\newtheorem{theorem}{Theorem}

\newtheorem{proposition}{Proposition}
\newtheorem{definition}{Definition}

\def\n{{\mathbf n}}
\def\y{{\mathbf y}}

\newcommand{\C}{\mathcal{C}}
\renewcommand{\S}{\mathcal{S}}
\newcommand{\D}{\mathcal{D}}

\newcommand{\indep}{\perp\!\!\!\perp}

\newcommand{\eat}[1]{}

\renewcommand{\b}{\mathbf}
\newcommand{\I}{\mathbb{I}}
\newcommand{\E}{\mathbb{E}}
\newcommand{\thetab}{\boldsymbol{\theta}}
\newcommand{\mub}{\boldsymbol{\mu}}
\newcommand{\x}{\mathbf{x}}
\newcommand{\X}{\mathcal{X}}

\newcommand{\<}{\langle}
\renewcommand{\>}{\rangle}

\newcommand{\cov}{\mathrm{cov}}

\newcommand{\diag}{\mathrm{diag}}

\newcommand{\law}{\xrightarrow[]{D}}
\newcommand{\tilden}{\tilde{\mathbf{n}}}
\newcommand{\tI}{\tilde{\mathbf{I}}}

\newcommand{\z}{\mathbf{z}}
\newcommand{\tz}{\tilde{\mathbf{z}}}

\newcommand{\barn}{\z}

\DeclareMathOperator*{\GCGM}{GCGM}
\DeclareMathOperator*{\CGM}{CGM}


\icmltitlerunning{Gaussian Approximation of Collective Graphical Models}

\begin{document} 

\twocolumn[
\icmltitle{Gaussian Approximation of Collective Graphical Models}

\icmlauthor{Li-Ping Liu\texorpdfstring{$^1$}{1}}{liuli@eecs.oregonstate.edu}
\icmlauthor{Daniel Sheldon\texorpdfstring{$^2$}{2}}{sheldon@cs.umass.edu}
\icmlauthor{Thomas G. Dietterich\texorpdfstring{$^1$}{1}}{tgd@eecs.oregonstate.edu}
\icmladdress{\texorpdfstring{$^1$}{1}School of EECS, Oregon State University, Corvallis, OR
  97331 USA \\\texorpdfstring{$^2$}{2}University of Massachusetts,
  Amherst, MA 01002 and Mount Holyoke College, South Hadley, MA 01075}

\icmlkeywords{collective graphical model, Gaussian approximation, inference}

\vskip 0.3in
]

\begin{abstract} 
The Collective Graphical Model (CGM) models a population of
independent and identically distributed individuals when only
collective statistics (i.e., counts of individuals) are
observed. Exact inference in CGMs is intractable, and previous work
has explored Markov Chain Monte Carlo (MCMC) and MAP approximations
for learning and inference. This paper studies Gaussian approximations
to the CGM. As the population grows large, we show that the CGM
distribution converges to a multivariate Gaussian distribution (GCGM)
that maintains the conditional independence properties of the original
CGM.  If the observations are exact marginals of the CGM or marginals
that are corrupted by Gaussian noise, inference in the GCGM
approximation can be computed efficiently in closed form. If the
observations follow a different noise model (e.g., Poisson), then
expectation propagation provides efficient and accurate approximate
inference. The accuracy and speed of GCGM inference is compared to the
MCMC and MAP methods on a simulated bird migration problem. The GCGM
matches or exceeds the accuracy of the MAP method while being significantly
faster.
\end{abstract} 

\section{Introduction}

Consider a setting in which we wish to model the behavior of a
population of independent and identically distributed (i.i.d.)
individuals but where we can only observe collective count data. For
example, we might wish to model the relationship between education,
sex, housing, and income from census data. For privacy reasons, the
Census Bureau only releases count data such as the number of people
having a given level of education or the number of men living in a
particular region.  Another example concerns modeling the behavior of
animals from counts of (anonymous) individuals observed at various
locations and times.  This arises in modeling the migration of fish
and birds.

The CGM is constructed by first defining the {\it individual
  model}---a graphical model describing a single individual.  Let $\C$
and $\S$ be the clique set and the separator set of a junction tree
constructed from the individual model.  Then, we define $N$ copies of
this individual model to create a population of $N$
i.i.d. individuals. This permits us to define count variables $\n_A$,
where $\n_A(i_A)$ is the number of individuals for which clique $A \in
\C \cup \S$ is in configuration $i_A$. The counts $\n = (\n_A: A \in
\C \cup \S)$ are the sufficient statistics of the individual
model. After marginalizing away the individuals, the CGM provides a
model for the joint distribution of $\n$.

In typical applications of CGMs, we make noisy observations $\y$ that
depends on some of the $\n$ variables, and we seek to answer queries
about the distribution of some or all of the $\n$ conditioned on these
observations. Let $\y = (\y_D : D \in \D )$, where $\D$ is a set of
cliques from the individual graphical model and $\y_D$ contains counts
of settings of clique $D$. We require each $D \subseteq A$ for some
clique $A \in \C \cup \S$ the individual model.  In addition to the
usual role in graphical models, the inference of the distribution of
$\n$ also serves to estimate the parameters of the individual model
(e.g. E step in EM learning), because $\n$ are sufficient statistics
of the individual model.  Inference for CGMs is much more difficult
than for the individual model. Unlike the individual model, many
conditional distributions in the CGM do not have a closed form. The
space of possible configurations of the CGM is very large, because
each count variable $\n_i$ can take values in $\{0,\ldots,N\}$.

The original CGM paper, \citet*{Sheldon2011} introduced a Gibbs
sampling algorithm for sampling from $P(\n|\y)$.  Subsequent
experiments showed that this exhibits slow mixing times, which
motivated \citet*{Sheldon2013} to introduce an efficient algorithm for
computing a MAP approximation based on minimizing a tractable convex
approximation of the CGM distribution. Although the MAP approximation
still scales exponentially in the domain size $L$ of the
individual-model variables, it was fast enough to permit fitting CGMs
via EM on modest-sized instances ($L=49$).  However, given that we
wish to apply this to problems where $L=1000$, we need a method that
is even more efficient.

This paper introduces a Gaussian approximation to the CGM.  Because the
count variables $\n_C$ have a multinomial distribution, it is
reasonable to apply the Gaussian approximation. However, this approach
raises three questions. First, is the Gaussian approximation
asymptotically correct? Second, can it maintain the sparse dependency
structure of the CGM distribution, which is critical to efficient
inference? Third, how well does it work with natural (non-Gaussian)
observation distributions for counts, such as the Poisson
distribution?  This paper answers these questions by proving an
asymptotically correct Gaussian approximation for CGMs. It shows that
this approximation, when done correctly, is able to preserve the
dependency structure of the CGM. And it demonstrates that by applying
expectation propagation (EP), non-Gaussian observation distributions
can be handled. The result is a CGM inference procedure that gives
good accuracy and achieves significant speedups over previous methods.

Beyond CGMs, our main result highlights a remarkable property of
discrete graphical models: the asymptotic distribution of the
vector of sufficient statistics is a Gaussian graphical model with the
same conditional independence properties as the original model.

\eat{ 
This paper is organized as follows. Section 2 introduces the notation
and a precise statement of the CGM distribution. Section 3 presents
the Gaussian approximation and proves its structural and consistency
properties.  It shows that if the observations are exact (i.e., direct
observations of $\n_i$), then inference can be computed in closed
form. Section 4 discusses the issues that arise when the observations
are noisy and shows that an application of expectation propagation (EP)
allows efficient approximate inference.  The accuracy and speed of the
approximations are evaluated in Section 5 on a 
bird migration simulation.
}

\section{Problem Statement and Notation}


Consider a graphical model defined on the graph $G = (V,E)$ with $n$
nodes and clique set $\C$. Denote the random variables by
$X_1, \ldots, X_n$. Assume for simplicity all variables take values
in the same domain $\X$ of size $L$.
Let $\x \in \X^n$ be a particular configuration of the variables, and let
$\x_C$ be the subvector of variables belonging to $C$. For each clique
$C \in \C$, let $\phi_C(\b{x}_C)$ be a non-negative potential
function. Then the probability model is:
\begin{align}
\label{model1}
p(\x)
&= \frac{1}{Z} \prod_{C \in \C} \phi_{C}(\x_C) \nonumber\\
&= \exp \! \Big( \sum_{C \in \C} \sum_{i_C \in \X^{|C|}}\!
                      \theta_{C}(i_C) \!\cdot\! \I(\x_C = i_C) \!-\! Q(\thetab) \Big).
\end{align}
The second line shows the model in exponential-family form \citet{wainwright2008graphical}, where
$\I(\pi)$ is an indicator variable for the event or expression 
$\pi$, and $\theta_{C}(i_C) = \log \phi_{C}(i_C)$ is an entry of the vector
of natural parameters. The function $Q(\thetab) = \log Z$ is the
log-partition function.
Given a fixed set of parameters $\thetab$ and any subset $A \subseteq V$,
the \emph{marginal distribution} $\mub_A$ is the vector with entries
$\mu_A(i_A) = \Pr(X_A = i_A)$ for all possible $i_A \in \X^{|A|}$. In
particular, we will be interested in the clique marginals $\mub_C$ and
the node marginals $\mub_i := \mub_{\{i\}}$.

\textbf{Junction Trees.} Our development relies on the existence of a
\emph{junction tree} \cite{lauritzen1996graphical} on the cliques of
$\C$ to write the relevant CGM and GCGM distributions in closed
form. Henceforth, \emph{we assume that such a junction tree
  exists}. In practice, this means that one may need to add fill-in
edges to the original model to obtain the \emph{triangulated} graph
$G$, of which $\C$ is the set of maximal cliques. This is a clear
limitation for graphs with high tree-width. Our methods apply directly
to trees and are most practical for low tree-width graphs.  Since we
use few properties of the junction tree directly, we review only the
essential details here and review the reader to
\citet{lauritzen1996graphical} for further details.  Let $C$ and $C'$
be two cliques that are adjacent in $\mathcal{T}$; their intersection
$S = C \cap C'$ is called a \emph{separator}. Let $\S$ be the set of
all separators of $\mathcal{T}$, and let $\nu(S)$ be the number of
times $S$ appears as a separator, i.e., the number of different edges
$(C,C')$ in $\mathcal{T}$ for which $S = C \cap C'$.

\paragraph{The CGM Distribution.}
Fix a sample size $N$ and let $\x^{1}, \ldots, \x^N$ be $N$
i.i.d. random vectors distributed according to the graphical model
$G$. For any set $A \subseteq V$ and particular setting $i_A \in
\X^{|A|}$, define the count
\begin{align}
\n_A(i_A) &= \sum_{m=1}^N \I(\x_A^m = i_A).
\end{align}
Let $\n_A = (\n_A(i_A): i_A \in \X^{|A|})$ be the complete vector of
counts for all possible settings of the variables in $A$.
In particular, let $\n_u := \n_{\{u\}}$ be the vector of node counts. 
Also, let $\n = (\n_{A} : A \in \C \cup \S)$ be the combined vector of
all clique and separator counts---these are sufficient statistics
of the sample of size $N$ from the graphical model.
The distribution
over this vector is the CGM distribution. 
\begin{proposition}
\label{cgm-distribution}
Let $\n$ be the vector of (clique and separator) sufficient statistics
of a sample of size $N$
from the discrete graphical model \eqref{model1}.
The probability mass function of $\n$ is given by $p(\n; \thetab) = h(\n) f(\n; \thetab)$ where
\begin{align}
\label{cgm-prob}
f(\n; \thetab) = 
\exp \big( \!\!\!\!\!\!\! \sum_{C \in \C, i_C \in \X^{|C|}}
\!\!\!\!\!\!\! \theta_{C}(i_C) \cdot \n_{C}(i_C) - N Q(\thetab) \big)
\end{align}
\begin{multline}
\label{cgm-base-measure}
h(\n) = N! \cdot 
\frac{\prod_{S \in \S}\prod_{i_S \in \X^{|S|}} \Big(\n_S(i_S)!\Big)^{\nu(S)}}
{ \prod_{C \in \C} \prod_{i_C \in \X^{|C|}} \n_{C}(i_C)!}  \\
\prod_{S \sim C \in \mathcal{T}, i_S \in \X^{|S|}} \!\!\!\!\!
\I \Big(\n_S(i_S) = \sum_{i_{C\setminus S}} \n_{C}(i_S, i_{C \setminus S}) \Big) \; \cdot \; \\
\prod_{C \in \C} \I \big( \sum_{i_C \in \X^{|C|}} \n_C(i_C) = N \big).
\end{multline}
Denote this distribution by $\CGM(N, \thetab)$. 
\end{proposition}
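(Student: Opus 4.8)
The plan is to split the probability mass of $\n$ into a parameter-dependent weight that is a function of the sufficient statistics only, and a purely combinatorial counting factor, and then to evaluate the latter by a sequential argument along the junction tree. Since $\n$ is a deterministic function of the labeled sample $(\x^1,\ldots,\x^N)$, I would start from $p(\n;\thetab) = \sum \Pr(\x^1,\ldots,\x^N)$, where the sum ranges over all \emph{ordered} samples whose clique and separator counts equal $\n$.

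First I would observe that every ordered sample consistent with $\n$ carries exactly the same probability $f(\n;\thetab)$. Using independence and the exponential-family form \eqref{model1}, $\prod_{m=1}^N p(\x^m) = \exp\!\big(\sum_{C,i_C}\theta_C(i_C)\sum_m \I(\x_C^m = i_C) - NQ(\thetab)\big)$, and substituting $\sum_m \I(\x_C^m = i_C) = \n_C(i_C)$ reproduces \eqref{cgm-prob} exactly. Consequently $p(\n;\thetab) = h(\n)\,f(\n;\thetab)$, where $h(\n)$ is simply the number of ordered samples whose sufficient statistics equal $\n$; this number is $0$ whenever $\n$ violates the marginalization or normalization constraints, which recovers the indicator factors in \eqref{cgm-base-measure}.

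The heart of the argument is then counting $h(\n)$. I would root $\mathcal{T}$ and process the cliques in an order $C_1,\ldots,C_k$ obeying the running-intersection property, so that for $j\ge 2$ the set $S_j = C_j \cap (C_1\cup\cdots\cup C_{j-1})$ is a separator contained entirely in the parent clique. Building the assignment incrementally, the $N$ individuals are first distributed over the settings of $C_1$ in $N!/\prod_{i_{C_1}}\n_{C_1}(i_{C_1})!$ ways; when $C_j$ is added, each individual's $S_j$-coordinates are already fixed, so within each group of $\n_{S_j}(i_{S_j})$ individuals sharing a separator value we distribute only the new coordinates $C_j\setminus S_j$, contributing the factor $\prod_{i_{S_j}}\n_{S_j}(i_{S_j})!\,/\,\prod_{i_{C_j}}\n_{C_j}(i_{C_j})!$. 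Because running intersection guarantees the new coordinates are genuinely unassigned and that any overlap with earlier cliques sits in $S_j$, this process is a bijection onto ordered samples with the prescribed clique counts. Multiplying the per-step factors yields $h(\n) = N!\,\prod_{S}\prod_{i_S}(\n_S(i_S)!)^{\nu(S)} \big/ \prod_C\prod_{i_C}\n_C(i_C)!$, matching \eqref{cgm-base-measure} once each separator is tallied with its edge-multiplicity $\nu(S)$.

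The main obstacle I anticipate is bookkeeping in this counting step rather than any conceptual depth: one must verify that the incremental assignment neither double-counts nor omits any configuration, which rests squarely on the running-intersection property ensuring each node is assigned exactly once and is seen by every later clique only through a separator. Confirming that the separator factorials accumulate the correct exponent $\nu(S)$ (one per tree edge realizing that separator) and that inconsistent $\n$ produce an empty sum completes the identification with \eqref{cgm-base-measure}.
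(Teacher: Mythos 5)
Your proof is correct, but it takes a genuinely different route from the paper's. You prove the proposition from scratch: after observing (as the paper also does) that every ordered sample with sufficient statistics $\n$ has the same probability $f(\n;\thetab)$, you evaluate $h(\n)$---the number of such ordered samples---by an incremental multinomial count along a junction-tree ordering of the cliques, with the running intersection property guaranteeing that step $j$ contributes exactly the factor $\prod_{i_{S_j}}\n_{S_j}(i_{S_j})!\big/\prod_{i_{C_j}}\n_{C_j}(i_{C_j})!$ and that each separator $S$ is hit exactly $\nu(S)$ times; your bookkeeping (new coordinates $C_j\setminus S_j$ are genuinely unassigned, group sizes at step $j$ equal $\n_{S_j}(i_{S_j})$ by the parent-side consistency indicator) is sound. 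The paper instead reduces the proposition to prior results: it starts from the classical hyper-multinomial form $p(\n)=h(\n)f'(\n;\thetab)$ of Sundberg and Lauritzen, where $f'$ is written in terms of clique and separator marginals, shows $f=f'$ whenever $h(\n)>0$ via the junction-tree reparameterization of $p(\x;\thetab)$ applied to a single ordered sample, and invokes \citet{Sheldon2011} for the fact that $h(\n)>0$ (local integer-count consistency) implies the existence of such an ordered sample. Your approach buys self-containedness: it needs neither the marginal reparameterization nor the citation, it directly establishes that $h(\n)$ equals the number of ordered samples with statistics $\n$ (which the paper only records as a corollary), and it constructively re-derives the local-to-global consistency property for integer counts. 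The cost is that the combinatorial core---delegated to the literature in the paper---must be verified in full, which is exactly the step you flag as requiring care.
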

Here, the notation $S \sim C \in \mathcal{T}$ means that $S$ is
adjacent to $C$ in $\mathcal{T}$.
This proposition was first proved in nearly this form by
\citet{sundberg1975some} (see also
\citet{lauritzen1996graphical}). Proposition~\ref{cgm-distribution}
differs from those presentations by writing $f(\n; \thetab)$ in terms
of the original parameters $\thetab$ instead of the clique and
separator marginals $\{\mub_C, \mub_{S}\}$, and by including hard constraints in the base
measure $h(\n)$. The hard constraints enforce 
consistency of the sufficient statistics of all cliques on their
adjacent separators, and were treated
implicitly prior to \citet{Sheldon2011}. 
A proof of the equivalence between our expression for $f(\n; \thetab)$
and the expressions from prior work is given in the supplementary material. 
\citet{dawid1993hyper} refer to the same distribution as the
\emph{hyper-multinomial} distribution due to the fact that it follows
conditional independence properties analogous to those in the original
graphical model.
\begin{proposition}
\label{cgm-independence}
Let $A, B \in \S \cup \C$ be two sets that are separated by the
separator $S$ in $\mathcal{T}$. Then $\n_{A} \indep \n_{B} \mid
\n_S$.
\end{proposition}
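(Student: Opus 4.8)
The plan is to read the conditional independence directly off the explicit product form of the CGM mass function in Proposition~\ref{cgm-distribution}, via the elementary factorization criterion: if a joint mass function factors as $p(\mathbf{u},\mathbf{v},\mathbf{w}) = g(\mathbf{u},\mathbf{w})\,g'(\mathbf{v},\mathbf{w})$, then $\mathbf{u} \indep \mathbf{v} \mid \mathbf{w}$. Here $\mathbf{w}$ will be the conditioning vector $\mathbf{n}_S$, and $\mathbf{u},\mathbf{v}$ the count vectors living on the two sides of the cut induced by $S$.

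First I would use $S$ to split the junction tree. Since $A$ and $B$ are separated by $S$, removing the edge of $\mathcal{T}$ associated with $S$ disconnects $\mathcal{T}$ into two subtrees $\mathcal{T}_A \ni A$ and $\mathcal{T}_B \ni B$; I assign every clique and every other separator of $\mathcal{T}$ to the side on which it lies. Write $\mathbf{n}_{\mathcal{T}_A}$ and $\mathbf{n}_{\mathcal{T}_B}$ for the collections of all clique- and separator-counts on each side, noting that $\mathbf{n}_A$ is a subvector of $\mathbf{n}_{\mathcal{T}_A}$ and $\mathbf{n}_B$ of $\mathbf{n}_{\mathcal{T}_B}$.

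Next I would check that every factor of $p(\mathbf{n}) = h(\mathbf{n})\,f(\mathbf{n};\thetab)$ is assignable to $g(\mathbf{n}_{\mathcal{T}_A},\mathbf{n}_S)$, to $g'(\mathbf{n}_{\mathcal{T}_B},\mathbf{n}_S)$, or to a pure function of $\mathbf{n}_S$. This is immediate for the terms indexed by a single clique $C$ — the exponential factors in $f(\mathbf{n};\thetab)$ and the $1/\mathbf{n}_C!$ factors — since each $C$ lies on exactly one side, and likewise for the separator terms $(\mathbf{n}_{S'}!)^{\nu(S')}$ with $S' \neq S$; the factor $(\mathbf{n}_S!)^{\nu(S)}$ is a function of $\mathbf{n}_S$ alone. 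For the consistency constraints $\I\big(\mathbf{n}_{S'}(i_{S'}) = \sum_{i_{C\setminus S'}} \mathbf{n}_{C}(i_{S'},i_{C\setminus S'})\big)$, each couples one separator $S'$ with one adjacent clique $C$, and these share a side \emph{except} for the two constraints attached to $S$ itself: these involve $\mathbf{n}_S$ together with its neighboring clique on side $A$ (resp.\ side $B$), so one goes into $g$ and the other into $g'$. The normalization constraints $\I\big(\sum_{i_C}\mathbf{n}_C(i_C)=N\big)$ split by side and the constant $N!$ is absorbed anywhere.

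Collecting terms yields $p(\mathbf{n}) = \psi(\mathbf{n}_S)\,g(\mathbf{n}_{\mathcal{T}_A},\mathbf{n}_S)\,g'(\mathbf{n}_{\mathcal{T}_B},\mathbf{n}_S)$, and the factorization criterion gives $\mathbf{n}_{\mathcal{T}_A} \indep \mathbf{n}_{\mathcal{T}_B} \mid \mathbf{n}_S$; restricting to the subvectors $\mathbf{n}_A, \mathbf{n}_B$ yields the claim. The part I expect to require the most care is the bookkeeping of the hard constraints — specifically, verifying that the two consistency constraints touching $S$ can each be placed on a single side once $\mathbf{n}_S$ is held fixed, and that no separator other than $S$ straddles the cut. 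Both facts are exactly what the running-intersection property of the junction tree guarantees, so the argument reduces to invoking that property correctly rather than to any genuinely hard computation.
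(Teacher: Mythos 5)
Your proposal is correct and takes essentially the same route as the paper: the paper's proof likewise reads the claim off the factorization of $p(\n;\thetab)$, noting that distinct count vectors appear together only in the consistency constraints linking a separator to an adjacent clique, so that the CGM is itself a graphical model with the structure of $\mathcal{T}$. The only difference is that you explicitly re-derive the ``separation implies conditional independence'' step by cutting the tree at $S$ and applying the factorization criterion, whereas the paper invokes it as a standard property of graphical models.
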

\begin{proof}
The probability model $p(\n; \thetab)$ factors over the clique and
separator count vectors $\n_C$ and $\n_S$. The only factors where two
different count vectors appear together are the consistency
constraints where $\n_S$ and $\n_C$ appear together if $S$ is adjacent
to $C$ in $\mathcal{T}$. Thus, the CGM is a graphical model with the
same structure as $\mathcal{T}$, from which the claim follows.
\end{proof}

\section{Approximating CGM by the Normal Distribution} \label{section_gcgm}

In this section, we will develop a Gaussian approximation, GCGM, of
the CGM and show that it is the asymptotically correct distribution as
$M$ goes to infinity. We then show that the GCGM has the same
conditional independence structure as the CGM, and we explicitly derive
the conditional distributions. These allow us to use Gaussian message
passing in the GCGM as a practical approximate inference method for
CGMs.

We will follow the most natural approach of approximating the CGM
distribution by a multivariate Gaussian with the same mean and
covariance matrix. The moments of the CGM distribution follow directly
from those of the indicator variables of the individual model: Fix an
outcome $\x = (x_1, \ldots, x_n)$ from the individual model and for
any set $A \subseteq V$ let $\mathbf{I}_A = \big(\I(\x_A = i_A) : i_A
\in \X^{|A|}\big)$ be the vector of all indicator variables for that
set. The mean and covariance of any such vectors are given by
\begin{align}
\E[\b{I}_A] &= \mub_A \\
\label{covariance}
\cov(\b{I}_A, \b{I}_B) &= \< \mub_{A,B} \> - \mub_A \mub_B^T. 
\end{align}
Here, the notation $\< \mub_{A,B} \>$ refers to the matrix whose $(i_A,
i_B)$ entry is the marginal probability $\Pr(X_A = i_A, X_B =
i_B)$. Note that Eq.~\eqref{covariance} follows immediately from the
definition of covariance for indicator variables, which is easily seen
in the scalar form: $\cov(\mathbb{I}(X_A = i_A), \mathbb{I}(X_B = i_B)) = \Pr(X_A = i_A,
X_B = i_B) - \Pr(X_A = i_A)\Pr(X_B = i_B)$.
Eq.~\eqref{covariance} also covers the case when $A \cap B$ is
nonempty. In particular if $A = B = \{u\}$, then we recover
$\cov(\b{I}_u, \b{I}_u) = \diag(\mub_u) - \mub_u \mub_u^T$, which is the
covariance matrix for the marginal multinomial distribution of $\b{I}_u$.

From the preceding arguments, it becomes clear that the covariance
matrix for the full vector of indicator variables has a simple block
structure. 
Define $\b{I} = (\b{I}_A: A \in \C \cup \S)$ to be the vector
concatention of all the clique and separator indicator variables, and
let $\mub = (\mub_A: A \in \C \cup \S) = \E[\b{I}]$ be the
corresponding vector concatenation of marginals. Then it follows from
\eqref{covariance} that the covariance matrix is
\begin{equation}
\label{covariance2}
\Sigma := \cov(\b{I}, \b{I}) = \hat{\Sigma} - \mub \mub^T,
\end{equation}
where $\hat{\Sigma}$ is the matrix whose $(A, B)$ block is the marginal
distribution $\< \mub_{A,B} \>$. 
In the CGM model, the count vector $\n$ can be written as $\n =
\sum_{m=1}^N \b{I}^m$, where $\b{I}^1, \ldots, \b{I}^N$ are i.i.d. copies of
$\b{I}$. As a result, the moments of the CGM are obtained by scaling
the moments of $\b{I}$ by $N$. We thus arrive at the natural
moment-matching Gaussian approximation of the CGM.
\begin{definition}
The Gaussian CGM, denoted $\GCGM(N, \thetab)$ is the multivariate
normal distribution $\mathcal{N}(N\mub, N\Sigma)$, where $\mub$ is the
vector of all clique and separator marginals of the graphical model
with parameters $\thetab$, and $\Sigma$ is defined
in Equation~\eqref{covariance2}.
\end{definition}
In the following theorem, we show the GCGM is asymptotically correct
and it is a Gaussian graphical model, which will lead to efficient
inference algorithms.
%
\begin{theorem}
\label{main-theorem}
Let $\n^N \sim \CGM(N, \thetab)$ for $N =1,2,\ldots$. Then following are true:
\begin{itemize}
\item[(i)] The GCGM is asymptotically correct. That is, as $N
  \rightarrow \infty$ we have
\begin{equation}
\frac{1}{\sqrt{N}}(\n^N - N \mub) \law \mathcal{N}(\b{0}, \Sigma).
\end{equation}
\item[(ii)] The GCGM is a Gaussian graphical model with the same
  conditional independence structure as the CGM. 
  Let $\z \sim \GCGM(N, \thetab)$
  and let $A, B \in \C \cup \S$ be two sets that are
  separated by separator $S$ in $\mathcal{T}$. Then $\z_{A} \indep
  \z_{B} \mid \z_S$.
\end{itemize}
\end{theorem}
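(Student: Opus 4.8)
For part (i) the plan is to apply the multivariate central limit theorem directly. Since $\n^N = \sum_{m=1}^N \b{I}^m$ with $\b{I}^1, \b{I}^2, \ldots$ i.i.d.\ copies of the single-individual indicator vector $\b{I}$, and since each $\b{I}^m$ has entries in $\{0,1\}$ and hence finite mean $\mub = \E[\b{I}]$ and finite covariance $\Sigma = \cov(\b{I},\b{I})$ from \eqref{covariance2}, the Lindeberg--L\'evy CLT gives
\begin{equation}
\frac{1}{\sqrt{N}}(\n^N - N\mub) = \frac{1}{\sqrt{N}}\sum_{m=1}^N(\b{I}^m - \mub) \law \mathcal{N}(\b{0},\Sigma).
\end{equation}
The only point worth flagging is that $\Sigma$ is singular---the indicators within any clique or separator sum to $1$, so $\b{I}$ lives on an affine subspace---but convergence in distribution holds for the resulting degenerate Gaussian without modification.

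For part (ii) I would work entirely with the single-individual covariance $\Sigma$, since $\z \sim \mathcal{N}(N\mub, N\Sigma)$ has the same conditional-independence structure as $\mathcal{N}(\b{0},\Sigma)$ (the positive scalar $N$ is irrelevant). For a jointly Gaussian vector, $\z_A \indep \z_B \mid \z_S$ is equivalent to the vanishing of the conditional cross-covariance, i.e.\ $\Sigma_{AB} - \Sigma_{AS}\Sigma_{SS}^{+}\Sigma_{SB} = 0$, where $\Sigma_{AB} = \cov(\b{I}_A,\b{I}_B)$ and $\Sigma_{SS}^{+}$ is the Moore--Penrose pseudoinverse. So the whole claim reduces to verifying this one matrix identity.

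The key structural input is two facts about the single individual. First, because $S$ separates $A$ from $B$ in $\mathcal{T}$, the running-intersection property makes $X_A$ and $X_B$ conditionally independent given $X_S$ in the individual model; since $\b{I}_A$ and $\b{I}_B$ are deterministic functions of $X_A$ and $X_B$ and conditioning on $\b{I}_S$ is the same as conditioning on $X_S$, this yields $\b{I}_A \indep \b{I}_B \mid \b{I}_S$. Second, because $\b{I}_S$ is a one-hot encoding of $X_S$, every conditional expectation given $X_S$ is a linear function of $\b{I}_S$; concretely $\E[\b{I}_A \mid \b{I}_S] = M_A \b{I}_S$ with $M_A(i_A,i_S) = \Pr(X_A = i_A \mid X_S = i_S)$, and likewise for $B$. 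Applying the law of total covariance, the first fact kills the expected conditional-covariance term, and the second turns the remainder into $\Sigma_{AB} = M_A \Sigma_{SS} M_B^T$; the same identities give $\Sigma_{AS} = M_A \Sigma_{SS}$ and $\Sigma_{SB} = \Sigma_{SS} M_B^T$. Substituting and using $\Sigma_{SS}\Sigma_{SS}^{+}\Sigma_{SS} = \Sigma_{SS}$ then collapses $\Sigma_{AS}\Sigma_{SS}^{+}\Sigma_{SB}$ to $\Sigma_{AB}$, which is the desired identity.

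I expect the main obstacle to be the singularity of $\Sigma_{SS}$: the standard Gaussian conditional-independence characterization is usually stated for invertible covariances, so I would need to justify the pseudoinverse version and check the compatibility condition that the rows of $\Sigma_{AS}$ and columns of $\Sigma_{SB}$ lie in the column space of $\Sigma_{SS}$. The factorizations $\Sigma_{AS} = M_A\Sigma_{SS}$ and $\Sigma_{SB} = \Sigma_{SS}M_B^T$ are exactly what guarantees this, so the linearity-of-conditional-expectation fact does double duty: it both produces the identity and certifies that the degenerate-Gaussian characterization applies. An alternative that sidesteps the pseudoinverse is to pass to a non-redundant parameterization---dropping one indicator per clique and separator so that $\Sigma_{SS}$ becomes invertible---and run the same computation there; I would keep this as a fallback but prefer the intrinsic pseudoinverse argument.
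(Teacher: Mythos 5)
Your proposal is correct, and part (i) is the same argument as the paper's (multivariate CLT applied to $\n^N = \sum_{m=1}^N \b{I}^m$; the limit being a degenerate Gaussian is harmless). Part (ii), however, takes a genuinely different route. The paper inherits the conditional independences from the finite-$N$ CGM: it invokes Proposition~\ref{cgm-independence} (CI for every $N$, via the factorization of the CGM density) and then argues in the supplement that conditional independence survives convergence in distribution. You never use the CGM factorization or any limiting argument at all; you verify the Gaussian CI criterion $\Sigma_{AB} = \Sigma_{AS}\Sigma_{SS}^{+}\Sigma_{SB}$ by a closed second-moment computation on the \emph{individual} model: junction-tree separation gives $\b{I}_A \indep \b{I}_B \mid \b{I}_S$, one-hot encoding gives the linearity $\E[\b{I}_A \mid \b{I}_S] = M_A \b{I}_S$, and the law of total covariance yields $\Sigma_{AB} = M_A \Sigma_{SS} M_B^T$, $\Sigma_{AS} = M_A\Sigma_{SS}$, $\Sigma_{SB} = \Sigma_{SS}M_B^T$, from which the identity collapses. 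Two remarks on what each approach buys. First, your route is arguably more robust: the general principle the paper leans on --- that CI is preserved under convergence in distribution --- is \emph{false} in full generality. For instance, with $W,\xi_1,\xi_2$ independent standard normals, set $X^N = W + \xi_1/N$, $Y^N = W + \xi_2/N$, $Z^N = W/N$; then $X^N \indep Y^N \mid Z^N$ for every $N$, but the limit $(W,W,0)$ has $X=Y$ and a constant $Z$, so CI fails. The paper's limit argument thus genuinely depends on the special structure of this problem (in essence, on second moments, which is exactly what you compute), whereas your proof needs no such principle. Second, the obstacle you flag --- justifying the pseudoinverse characterization when $\Sigma_{SS}$ is singular --- is smaller than you fear: for \emph{any} jointly distributed vector, $\Sigma_{SS}v = 0$ implies $\var(v^T\z_S)=0$, hence $\Sigma_{AS}v = 0$, so the column-space compatibility holds automatically; your factorizations are a valid alternative certificate, just not a necessary one. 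A pleasant by-product of your computation is that the matrices $M_A\Sigma_{SS}$, $M_A\Sigma_{SS}M_B^T$ are precisely the ingredients of the conditional densities the paper later builds in Section~\ref{gcgm_tree}, so your proof doubles as a derivation of those formulas.
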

\begin{proof}
Part (i) is a direct application of the multivariate central
limit theorem to the random vector $\n^N$, which, as noted above, is a
sum of i.i.d. random vectors $\b{I}^1, \ldots, \b{I}^N$ with mean
$\mub$ and covariance $\Sigma$ \cite{feller1968}.

Part (ii) is a consequence of the fact that these conditional
independence properties hold for each $\n^N$
(Proposition~\ref{cgm-independence}), so they also hold in the limit as
$N \rightarrow \infty$. While this is intuitively clear, it seems
to require further justification, which is provided in the
supplementary material. 
\end{proof}

\subsection{Conditional Distributions}
The goal is to use inference in the GCGM as a tractable approximate
alternative inference method for CGMs. However, it is very difficult to 
compute the covariance matrix $\Sigma$ over all cliques. In particular, note that the
$(C,C')$ block requires the joint marginal $\<\mub_{C,C'}\>$, and if
$C$ and $C'$ are not adjacent in $\mathcal{T}$ this is hard to
compute. Fortunately, we can sidestep the problem completely by
leveraging the graph structure from Part~(ii) of
Theorem~\ref{main-theorem} to write the distribution as a product of
conditional distributions whose parameters are easy to compute (this
effectively means working with the inverse covariance matrix instead
of $\Sigma$). We then perform inference by Gaussian message passing
on the resulting model.

A challenge is that $\Sigma$ is not full rank, so the GCGM
distribution as written is degenerate and does not have a density.
This can be seen by noting that any vector $\n \sim \CGM(N; \thetab)$
with nonzero probability satisfies the affine consistency constraints
from Eq.~\eqref{cgm-base-measure}---for example, each vector $\n_C$
and $\n_S$ sums to the population size $N$---and that these affine
constraints also hold with probability one in the limiting
distribution. To fix this, we instead use a linear transformation
$\mathbb{T}$ to map $\z$ to a reduced vector $\tz = \mathbb{T} ~ \z$
such that the reduced covariance matrix $\tilde{\Sigma} = \mathbb{T} ~
\Sigma ~ \mathbb{T}^T$ is invertible. The work by
\citet{loh2013structure} proposed a minimal representation of the
graphical model in \eqref{model1}, and the corresponding random
variable has a full rank covariance matrix. We will find a
transformation $\mathbb{T}$ to project our indicator variable
$\mathbf{I}$ into that form. Then $\mathbb{T} \; \mathbf{I}$ (as well
as $\mathbb{T} \; \n$ and $\mathbb{T} \; \z$) will have a full rank
covariance matrix.

Denote by $\C^+$ the maximal and non-maximal cliques in the
triangulated graph.  Note that each $D \in \C^+$ must be a subset of
some $A \in \C \cup \S$ and each subset of $A$ is also a clique in
$\C^+$. For every $D\in \C^+$, let $\X_0^D =(\X \backslash \{L\})^{|D|}$
denote the space of possible configurations of $D$ after excluding the
largest value, $L$, from the domain of each variable in $D$.  The
corresponding random variable $\I$ in the minimal representation is
defined as \cite{loh2013structure}:
\begin{eqnarray}
\tI = (\I(\x_D = i_D): i_D \in \X_0^D, D \in \C^+ ) ~~. 
\end{eqnarray} 
$\tI_D$ can be calculated linearly from $\mathbf{I}_A$ when
$D\subseteq A$ via the matrix $\mathbb{T}_{D, A}$ whose $(i_D,i_A)$
entry is defined as
\begin{eqnarray} 
\mathbb{T}_{D, A}(i_D, i_A) &=& \I(i_D \sim_D i_A), \label{trans}
\end{eqnarray}
where $\sim_D$ means that $i_D$ and $i_A$ agree on the setting of the
variables in $D$.  It follows that $\tI_D = \mathbb{T}_{D, A} \;
\mathbf{I}_A$.  The whole transformation $\mathbb{T}$ can be built in
blocks as follows: For every $D \in \C^+$, choose $A \in \C \cup \S$
and construct the $\mathbb{T}_{D, A}$ block via (\ref{trans}).  Set
all other blocks to zero. Due to the redundancy of $\mathbf{I}$, there
might be many ways of choosing $A$ for $D$ and any one will work as
long as $D \subseteq A$.

\begin{proposition}
\label{prop-tz} Define $\mathbb{T}$ as above, and define 
$\tz = \mathbb{T} ~ \z$, $\tz_{A^+} = (\tz_D: D\subseteq A), A\in \C \cup \S$. 
Then 
\begin{itemize}
\item[(i)] If $A, B \in \C \cup S$ are
separated by $S$ in $\mathcal{T}$, it holds that $\tz_{A^+} \indep \tz_{B^+}
\mid \tz_{S^+}$.
\item[(ii)] The covariance matrix of $\tz$ has full rank. 
\end{itemize}
\end{proposition}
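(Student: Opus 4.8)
The plan is to reduce both parts to facts already available, namely the conditional independence of the GCGM (Theorem~\ref{main-theorem}(ii)) and the minimality of the representation of \citet{loh2013structure}, so that I never have to manipulate $\tilde{\Sigma}$ directly.

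For part (i), the key observation is that $\tz_{A^+}$ is a deterministic linear function of $\z_A$ alone. Each block $\tz_D$ with $D \subseteq A$ equals $\mathbb{T}_{D,A}\,\z_A$: even if $\mathbb{T}$ was built using a different ambient set $A'$ for $D$, the expressions $\mathbb{T}_{D,A}\,\z_A$ and $\mathbb{T}_{D,A'}\,\z_{A'}$ agree with probability one, since both compute the marginal counts of $D$ and the consistency constraints inherited from Eq.~\eqref{cgm-base-measure} hold almost surely in the limiting distribution. Hence, on the support of $\z$, we have $\tz_{A^+} = g_A(\z_A)$, $\tz_{B^+} = g_B(\z_B)$, and $\tz_{S^+} = g_S(\z_S)$ for linear maps $g_A, g_B, g_S$. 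First I would record that $g_S$ is invertible on the support: by \citet{loh2013structure} the minimal components $\{\tz_D : D \subseteq S\}$ determine all marginals of $S$ via inclusion--exclusion, so $\sigma(\tz_{S^+}) = \sigma(\z_S)$. Then, starting from $\z_A \indep \z_B \mid \z_S$ (Theorem~\ref{main-theorem}(ii)), applying the functions $g_A$ and $g_B$ preserves conditional independence to give $\tz_{A^+} \indep \tz_{B^+} \mid \z_S$, and replacing the conditioning variable $\z_S$ by the informationally equivalent $\tz_{S^+}$ yields the claim $\tz_{A^+} \indep \tz_{B^+} \mid \tz_{S^+}$.

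For part (ii), I would use that $\cov(\tz) = \mathbb{T}\,(N\Sigma)\,\mathbb{T}^T = N\,\mathbb{T}\Sigma\mathbb{T}^T = N\,\cov(\tI)$, so it suffices to show that $\cov(\tI)$ has full rank. This covariance is singular exactly when some nonzero linear combination $\sum_{D, i_D} c_D(i_D)\,\I(\x_D = i_D)$ is almost surely constant. Assuming the individual model has full support (all configurations have positive probability, i.e.\ finite $\thetab$), "almost surely constant" forces "constant on all of $\X^n$," so singularity would require a nontrivial affine dependence among the indicators $\{\I(\x_D = i_D) : i_D \in \X_0^D,\ D \in \C^+\}$ together with the constant function. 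This is precisely what the minimality of the representation in \citet{loh2013structure} rules out: those indicators are linearly independent and independent of the constant. Hence $\cov(\tI) \succ 0$ and $\cov(\tz)$ has full rank.

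The main obstacle I anticipate is part (ii): invoking the minimality result of \citet{loh2013structure} cleanly and making precise the full-support assumption needed to pass from "almost surely constant" to "identically constant," and thence to genuine linear independence of the indicators. A secondary subtlety in part (i) is justifying that $\tz_{A^+}$ depends on $\z_A$ only and that $g_S$ is invertible; both rest on the consistency constraints holding almost surely in the limit and on the inclusion--exclusion recovery of full marginals from the minimal parametrization.
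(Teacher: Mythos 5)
Your proposal is correct and follows essentially the same route as the paper: part (i) via the (almost-sure) linear bijection between $\z_A$ and $\tz_{A^+}$ plus preservation of conditional independence under deterministic maps with an informationally equivalent conditioning variable, and part (ii) by reducing full rank of $\cov(\tz)$ to the minimality (linear independence of the indicators and the constant) result of \citet{loh2013structure}. If anything, you are more explicit than the paper on two points it glosses over---that the consistency constraints hold only almost surely in the limit, and that conditioning transfers because $\sigma(\tz_{S^+})=\sigma(\z_S)$ rather than via a one-directional functional dependence---so no gaps remain.
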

\begin{proof}
In the appendix, we show that for any $A \in \C \cup \S$,
$\mathbf{I}_A$ can be linearly recovered from $\tI_{A^+} = (\tI_{D}: D
\subseteq A)$.  So there is a linear bijection between $\mathbf{I}_A$
and $\tI_{A^+}$ (The mapping from $\mathbf{I}_A$ to $\tI_{A^+}$ has
been shown in the definition of $\mathbb{T}$).  The same linear bijection relation also exists
between $\n_A$ and $\tilden_{A^+} = \sum_{m=1}^{N} \tI_{A^+}^{m}$ and
between $\z_A$ and $\tz_{A^+}$.

Proof of (i): Since $\z_A \indep \z_B \mid \z_S$, it follows that
$\tz_{A^+} \indep \tz_{B^+} \mid \z_{S}$ because $\tz_{A^+}$ and
$\tz_{B^+}$ are deterministic functions of $\z_A$ and $\z_B$
respectively. Since $\z_S$ is a deterministic function of $\tz_{S^+}$,
the same property holds when we condition on $\tz_{S^+}$ instead of
$\z_S$.

Proof of (ii): The bijection between $\mathbf{I}$ and $\tI$ indicates
that the model representation of \citet{loh2013structure} defines the
same model as \eqref{model1}.  By \citet{loh2013structure}, $\tI$ has
full rank covariance matrix and so do $\tilden$ and $\tz$.
\end{proof}

With this result, the GCGM can be decomposed into conditional
distributions, and each distribution is a non-degenerate Gaussian
distribution.

Now let us consider the observations $\y=\{\y_D, D \in \D \}$, where
$\D$ is the set of cliques for which we have observations.  We require
each $D \in \D$ be subset of some clique $C\in \C$.  When choosing a
distribution $p(\y_D | \z_C)$, a modeler has substantial flexibility.
For example, $p(\y_D | \z_C)$ can be noiseless, $\y_D(i_D) =
\sum_{i_{C \backslash D}} \z_C (i_D, i_{C\backslash D})$, which
permits closed-form inference. Or $p(\y_D | \z_C)$ can consist of
independent noisy observations: $p(\y_D | \z_C) =
\prod_{i_D}p(\y_D(i_D) |\sum_{i_{C \backslash D}} \z_C (i_D,
i_{C\backslash D}))$.  With a little work, $p(\y_D | \z_C)$ can be
represented by $p(\y_D | \tz_{C^+})$. 

\subsection{Explicit Factored Density for Trees} \label{gcgm_tree} 

We describe how to decompose GCGM for the special case when the
original graphical model $G$ is a tree. We assume that only counts of
single nodes are observed. In this case, we can marginalize out edge
(clique) counts $\z_{\{u, v\}}$ and retain only node (separator) counts
$\z_u$.  Because the GCGM has a normal distribution, marginalization is
easy.  The conditional distribution is then defined only on node counts.
With the definition of $\tz$ in Proposition~\eqref{prop-tz} and the
property of conditional independence, we can write
\begin{equation}
\label{GCGM-factor}
p(\tz_1, \ldots, \tz_n) = p(\tz_r) \prod_{(u,v) \in E} p(\tz_v \mid \tz_u).
\end{equation}
Here $r \in V$ is an arbitrarily-chosen root node, and $E$ is the set
of \emph{directed} edges of $G$ oriented away from $r$.  The
marginalization of the edges greatly reduces the size of the inference
problem, and a similar technique is also applicable to general GCGMs.

Now specify the parameters of the Gaussian conditional densities
$p(\tz_v \mid \tz_u)$ in Eq.~\eqref{GCGM-factor}. Assume the 
blocks $\mathbb{T}_{u, u}$ and $\mathbb{T}_{v, v}$ are defined as \eqref{trans}. 
Let $\tilde{\mub}_u = \mathbb{T}_{u, u} \; \mub_{u}$ be the
marginal vector of node $u$ without its last entry, and let
$\<\tilde{\mub}_{u,v}\> = \mathbb{T}_{u, u} ~ 
\<\mub_{u,v}\> ~ \mathbb{T}_{v, v}^T$
be the marginal matrix over edge $(u,v)$, minus the final row and
column. Then the mean and covariance martix of the joint distribution are
\begin{eqnarray} 
\boldsymbol{\eta} := N \begin{bmatrix}\tilde{\mub}_u \\
  \tilde{\mub}_v \end{bmatrix},
\quad
N^2 \begin{bmatrix}
\diag(\tilde{\mub}_u)   &  \<\tilde{\mub}_{u,v}\> \\
\<\tilde{\mub}_{v,u}\>  &   \diag(\tilde{\mub}_v)
\end{bmatrix}
- \boldsymbol{\eta} \boldsymbol{\eta}^T. \label{uvjoint}
\end{eqnarray}
The conditional density $p(\tz_v \mid \tz_u)$ is obtained by standard
Gaussian conditioning formulas.

If we need to infer $\z_{\{u,v\}}$ from some distribution $q(\tz_{u}, \tz_{v})$, 
we first calculate the distribution $p(\tz_{\{u,v\}} | \tz_{u}, \tz_{v})$. 
This time we assume blocks $\mathbb{T}_{\{u, v\}^{+}, \{u, v\}} = 
(\mathbb{T}_{u, \{u, v\}} : D \in \{u, v\})$ are defined as \eqref{trans}. 
We can find the mean and variance of 
$p(\tz_u,\tz_v, \tz_{\{u, v\}})$ by applying linear transformation 
$\mathbb{T}_{\{u, v\}^{+}, \{u, v\}}$ on the mean and variance of $\z_{\{u, v\}}$. 
Standard Gaussian conditioning formulas then give the conditional 
distribution $p(\tz_{\{u,v\}} \mid \tz_u, \tz_v)$. Then we can recover 
the distribution of $\z_{\{u,v\}}$ from distribution 
$p(\tz_{\{u,v\}} | \tz_{u}, \tz_{v})q(\tz_{u}, \tz_{v})$.

{\bf Remark:} Our reasoning gives a completely different way
of deriving some of the results of 
\citet{loh2013structure} concerning the sparsity pattern of the inverse
covariance matrix of the sufficient statistics of a discrete graphical model.
The conditional independence in Proposition~\ref{cgm-independence} 
for the factored GCGM density translates
directly to the sparsity pattern in the precision matrix $\Gamma =
\tilde{\Sigma}^{-1}$. Unlike the reasoning of \citeauthor{loh2013structure}, we
derive the sparsity directly from the conditional independence
properties of the asymptotic distribution (which are inherited from
the CGM distribution) and the fact that the CGM and GCGM share the same
covariance matrix.

\section{Inference with Noisy Observations}

We now consider the problem of inference in the GCGM when the
observations are noisy.  Throughout the remainder of the paper, we
assume that the individual model---and, hence, the CGM---is a tree. In
this case, the cliques correspond to edges and the separators
correspond to nodes. We will also assume that only the nodes are
observed. For notational simplicity, we will assume that every node is
observed (with noise). (It is easy to marginalize out unobserved nodes
if any.) From now on, we use $uv$ instead of $\{u, v\}$ to represent 
edge clique. Finally, we assume that the entries have been dropped from
the vector $\z$ as described in the previous section so that it has
the factored density described in Eq.~\ref{GCGM-factor}. 

Denote the observation variable for node $u$ by $\mathbf{y}_u$, and
assume that it has a Poisson distribution.  In the (exact) CGM, this
would be written as $\mathbf{y}_u \sim
\mathrm{Poisson}(\mathbf{n}_{u})$. However, in our GCGM, this instead
has the form
\begin{eqnarray}
\mathbf{y}_{u} \sim  \mathrm{Poisson}(\lambda \barn_{u}),
\end{eqnarray}
where $\barn_{u}$ is the corresponding continuous variable and
$\lambda$ determines the amount of noise in the distribution.  Denote
the vector of all observations by $\mathbf{y}$.  Note that the missing
entry of $\z_u$ must be reconstructed from the
remaining entries when computing the likelihood.

With Poisson observations, there is no longer a closed-form solution
to message passing in the GCGM.  We address this by applying
Expectation Propagation (EP) with the Laplace approximation. This
method has been previously applied to nonlinear dynamical systems by
\citet*{ypma2005}.

\subsection{Inferring Node Counts}

In the GCGM with observations, the potential on each edge $(u, v) \in E$ is defined as 
\begin{eqnarray}
\psi(\barn_u, \barn_v) = \hspace{5.5cm} \nonumber\\ \hspace{0.8cm}
\left\{ 
\begin{array}{ll}
p(\barn_v,  \barn_u) p(\mathbf{y}_v | \barn_v) p(\mathbf{y}_u | \barn_u)
& \hspace{-0.1cm} \mbox{if } u \mbox{ is root} \\
p(\barn_v | \barn_u) p(\mathbf{y}_v | \barn_v) & \mbox{otherwise.}
\end{array}
\right.  \label{eppoten}
\end{eqnarray}
We omit the subscripts on $\psi$ for notational simplicity.  The joint
distribution of $(\barn_v, \barn_u)$ has mean and
covariance shown in \eqref{uvjoint}.

With EP, the model approximates potential on edge $(u, v) \in E$ with normal 
distribution in context $q_{\backslash uv}(\barn_u)$ and $q_{\backslash uv}(\barn_v)$. 
The context for edge $(u, v)$ is defined as
\begin{eqnarray}
q_{\backslash uv}(\barn_u) &=& \prod_{(u, v') \in E, v' \neq v} q_{uv'}(\barn_u)
\label{messageu}\\
q_{\backslash uv}(\barn_v) &=& \prod_{(u', v) \in E, u' \neq u} q_{u'v}(\barn_v), 
\label{messagev}
\end{eqnarray}
where each $q_{uv'}(\barn_u)$ and $q_{u'v}(\barn_v)$ have the form of normal densities. 

Let $\xi(\barn_u, \barn_v) = q_{\backslash uv}(\barn_u) q_{\backslash uv}(\barn_v)\psi(\barn_u, \barn_v)$.
The EP update of $q_{uv}(\barn_u)$ and $q_{uv}(\barn_v)$ is computed as
\begin{eqnarray}
q_{uv}(\barn_u) &=& \frac{\mathrm{proj}_{\barn_u} [\xi(\barn_u, \barn_v)]}{q_{\backslash uv}(\barn_u)}
\label{proju} \\
q_{uv}(\barn_v) &=& \frac{\mathrm{proj}_{\barn_v} [\xi(\barn_u, \barn_v)]}{q_{\backslash uv}(\barn_v)}. 
\label{projv}
\end{eqnarray}

The projection operator, $\mathrm{proj}$, is computed in two
steps. First, we find a joint approximating normal distribution via
the Laplace approximation and then we project this onto each of the
random variables $\barn_u$ and $\barn_v$.  In the Laplace
approximation step, we need to find the mode of $\log \xi(\barn_u,
\barn_v)$ and calculate its Hessian at the mode to obtain the mean and
variance of the approximating normal distribution:
\begin{eqnarray} 
\mu^\xi_{uv} &=& \operatorname*{arg\,max}_{(\barn_u, \barn_v)} \log \xi(\barn_u, \barn_v) \label{laplacemean} \\
\Sigma^\xi_{uv} &=& \left( \operatorname*{\nabla^2}_{(\barn_u, \barn_v) 
= \mu^\xi_{uv}} \log \xi(\barn_u, \barn_v) \right)^{-1}.
\label{laplacevar}
\end{eqnarray}

The optimization problem in (\ref{laplacemean}) is solved by
optimizing first over $\barn_u$ then over $\barn_v$. The
optimal value of $\barn_u$ can be computed in closed form
in terms of $\barn_v$, since only normal densities are involved.  
Then the optimal value of $\barn_v$ is found via gradient methods (e.g., BFGS).
The function $\log \xi(\barn_u, \barn_v)$ is concave, so we can
always find the global optimum. Note that this
decomposition approach only depends on the tree structure of the model
and hence will work for any observation distribution.

At the mode, we find the mean and variance of the normal distribution
approximating $p(\barn_u, \barn_v | \mathbf{y})$ via
\eqref{laplacemean} and \eqref{laplacevar}.  With this distribution,
the edge counts can be inferred with the method of Section
\ref{gcgm_tree}. In the projection step in \eqref{proju} and
\eqref{projv}, this distribution is projected to one of $\barn_u$ or
$\barn_v$ by marginalizing out the other.

\subsection{Complexity analysis}

What is the computational complexity of inference with the GCGM?  When
inferring node counts, we must solve the optimization problem and
compute a fixed number of matrix inverses. Each matrix inverse takes
time $L^{2.5}$.  In the Laplace approximation step, each gradient
calculation takes $O(L^2)$ time.  Suppose $m$ iterations are needed.
In the outer loop, suppose we must perform $r$ passes of EP message
passing and each iteration sweeps through the whole tree. Then the
overall time is $O(r|E| \max (m L^2, L^{2.5}))$.  The maximization
problem in the Laplace approximation is smooth and concave, so it is
relatively easy. In our experiments, EP usually converges within 10
iterations.

In the task of inferring edge counts, we only consider the complexity
of calculating the mean, as this is all that is needed in our
applications. This part is solved in closed form, with the most
time-consuming operation being the matrix inversion. By exploiting the
simple structure of the covariance matrix of $\barn_{uv}$ , we can
obtain an inference method with time complexity of $O(L^3)$.

\section{Experimental Evaluation}

In this section, we evaluate the performance of our method and compare
it to the MAP approximation of \citet*{Sheldon2013}.  The evaluation
data are generated from the bird migration model introduced in
\citet{Sheldon2013}.  This model simulates the migration of a
population of $M$ birds on an $L = \ell \times \ell$ map.  The entire
population is initially located in the bottom left corner of the
map. Each bird then makes independent migration decisions for $T=20$
time steps. The transition probability from cell $i$ to cell $j$ at
each time step is determined by a logistic regression equation that
employs four features. These features encode the distance from cell
$i$ to cell $j$, the degree to which cell $j$ falls near the path from
cell $i$ to the destination cell in the upper right corner, the degree
to which cell $j$ lies in the direction toward which the wind is
blowing, and a factor that encourages the bird to stay in cell $i$.
Let $\mathbf{w}$ denote the parameter vector for this logistic
regression formula.  In this simulation, the individual model for each
bird is a $T$-step Markov chain $X=(X_1, \ldots, X_{20})$ where the
domain of each $X_t$ consists of the $L$ cells in the map.  The CGM
variables $\mathbf{n} = (\mathbf{n}_1, \mathbf{n}_{1,2}, \mathbf{n}_2,
\ldots, \mathbf{n}_T)$ are vectors of length $L$ containing counts of
the number of birds in each cell at time $t$ and the number of birds
moving from cell $i$ to cell $j$ from time $t$ to time $t+1$.  We will
refer to these as the ``node counts'' (N) and the ``edge counts''
(E). At each time step $t$, the data generation model generates an
observation vector $\mathbf{y}_t$ of length $L$ which contains noisy
counts of birds at all map cells at time $t$, $\mathbf{n}_t$. The
observed counts are generated by a Poisson distribution with unit
intensity.

We consider two inference tasks. In the first task, the parameters of
the model are given, and the task is to infer the expected value of
the posterior distribution over $\mathbf{n}_t$ for each time step $t$
given the observations $\mathbf{y}_1, \ldots, \mathbf{y}_T$ (aka
``smoothing'').  We measure the accuracy of the node counts and edge
counts separately.

An important experimental issue is that we cannot compute the true MAP
estimates for the node and edge counts.  Of course we have the values
generated during the simulation, but because of the noise introduced
into the observations, these are not necessarily the expected values
of the posterior. Instead, we estimate the expected values by running
the MCMC method \citep{Sheldon2011} for a burn-in period of 1 million Gibbs iterations and
then collecting samples from 10 million Gibbs iterations and averaging
the results.  We evaluate the accuracy of the approximate methods as the
relative error $||\mathbf{n}_{app} - \mathbf{n}_{mcmc}||_1 /
||\mathbf{n}_{mcmc}||_1$, where $\mathbf{n}_{app}$ is the approximate
estimate and $\mathbf{n}_{mcmc}$ is the value obtained from the Gibbs
sampler.  In each experiment, we report the mean and standard
deviation of the relative error computed from 10 runs. Each run
generates a new set of values for the node counts, edge counts, and
observation counts and requires a separate MCMC baseline run.

We compare our method to the approximate MAP method introduced by
\citet{Sheldon2013}.  By treating counts as continuous and
approximating the log factorial function, their MAP method finds the
approximate mode of the posterior distribution by solving a convex
optimization problem.  Their work shows that the MAP method is much
more efficient than the Gibbs sampler and produces inference results
and parameter estimates very similar to those obtained from long MCMC
runs.
 
The second inference task is to estimate the parameters $\mathbf{w}$
of the transition model from the observations.  This is performed via
Expectation Maximization, where our GCGM method is applied to compute
the E step. We compute the relative error with respect to the true
model parameters. 

Table~\ref{varpop} compares the inference accuracy of the approximate
MAP and GCGM methods. In this table, we fixed $L=36$, set the logistic
regression coefficient vector $\mathbf{w}=(1, 2, 2, 2)$, and varied
the population size $N \in \{36, 360, 1080, 3600\}$.  At the smallest
population size, the MAP approximation is slightly better, although
the result is not statistically significant.  This makes sense, since
the Gaussian approximation is weakest when the population size is
small. At all larger population sizes, the GCGM gives much more
accurate results. Note that the MAP approximation exhibits much higher
variance as well.

\begin{table}[ht]
\caption{Relative error in estimates of node counts (``N'') and edge
  counts (``E'') for different population sizes $N$.}
\label{varpop}
\centering
\scalebox{0.85}{
\begin{tabular}{ccccc}
  \hline
 $N =$ & 36 & 360 & 1080 & 3600 \\ 
  \hline
  MAP(N) & .173$\pm$.020 & .066$\pm$.015 & .064$\pm$.012 & .069$\pm$.013 \\ 
  MAP(E) & .350$\pm$.030 & .164$\pm$.030 & .166$\pm$.027 & .178$\pm$.025 \\ 
  \hline
  GCGM(N) & .184$\pm$.018 & .039$\pm$.007 & .017$\pm$.003 & .009$\pm$.002 \\ 
  GCGM(E) & .401$\pm$.026 & .076$\pm$.008 & .034$\pm$.003 & .017$\pm$.002 \\ 
   \hline
 \end{tabular}}
\end{table}

Our second inference experiment is to vary the magnitude of the
logistic regression coefficients. With large coefficients, the
transition probabilities become more extreme (closer to 0 and 1), and
the Gaussian approximation should not work as well.  We fixed $N=1080$
and $L=36$ and evaluated three different parameter vectors:
$\mathbf{w}_{0.5} = (0.5, 1, 1, 1)$, $\mathbf{w}_1 = (1, 2, 2, 2)$ and
$\mathbf{w}_2 = (2, 4, 4, 4)$. Table~\ref{varparam} shows that for
$\mathbf{w}_{0.5}$ and $\mathbf{w}_{1}$, the GCGM is much more
accurate, but for $\mathbf{w}_{2}$, the MAP approximation gives a
slightly better result, although it is not statistically significant
based on 10 trials. 

\begin{table}[ht]
\caption{Relative error in estimates of node counts (``N'') and edge
  counts (``E'') for different settings of the logistic regression
  parameter vector $\mathbf{w}$}
\label{varparam}
\centering
\begin{tabular}{cccc}
  \hline
 & $\mathbf{w}_{0.5}$ & $\mathbf{w}_{1}$ & $\mathbf{w}_2$ \\ 
 \hline
  MAP(N) & .107$\pm$.014 & .064$\pm$.012 & .018$\pm$.004 \\ 
  MAP(E) & .293$\pm$.038 & .166$\pm$.027 & .031$\pm$.004 \\ 
  \hline
  GCGM(N) & .013$\pm$.002 & .017$\pm$.003 & .024$\pm$.004 \\ 
  GCGM(E) & .032$\pm$.004 & .034$\pm$.003 & .037$\pm$.005 \\
\hline
\end{tabular}
\end{table}

\begin{figure*}
\centering
\begin{tabular}{ccc}
\includegraphics[width=0.32\textwidth]{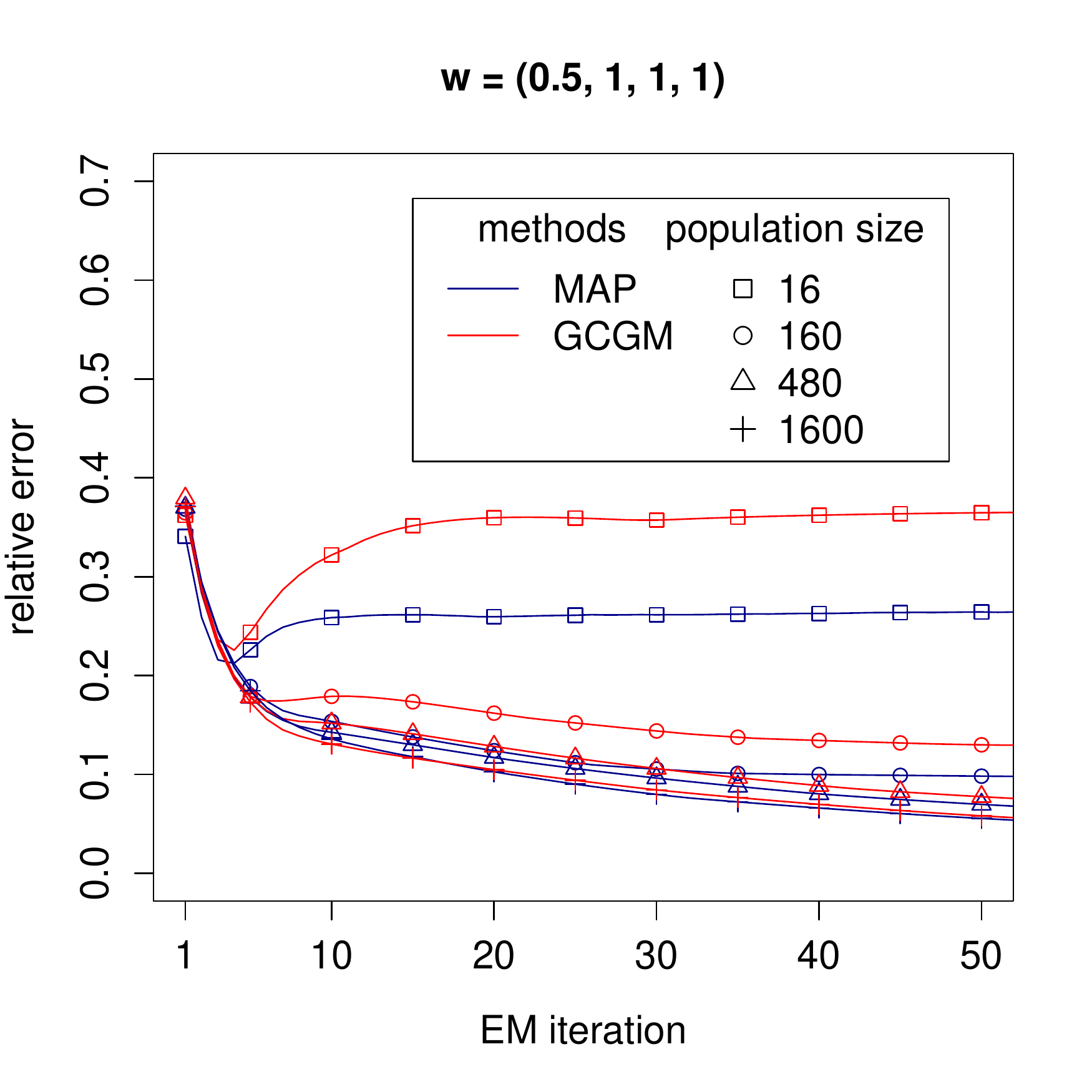}
\includegraphics[width=0.32\textwidth]{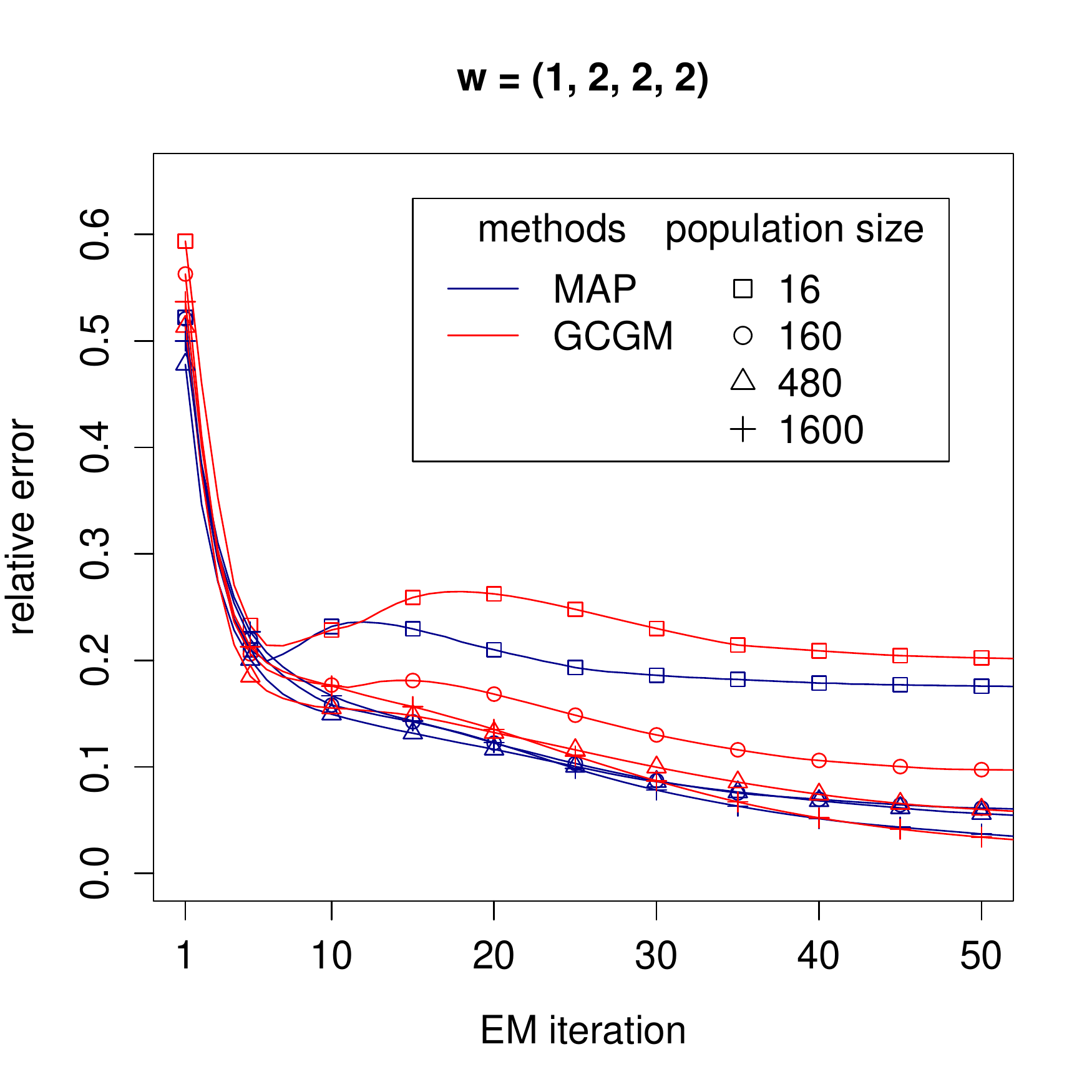}
\includegraphics[width=0.32\textwidth]{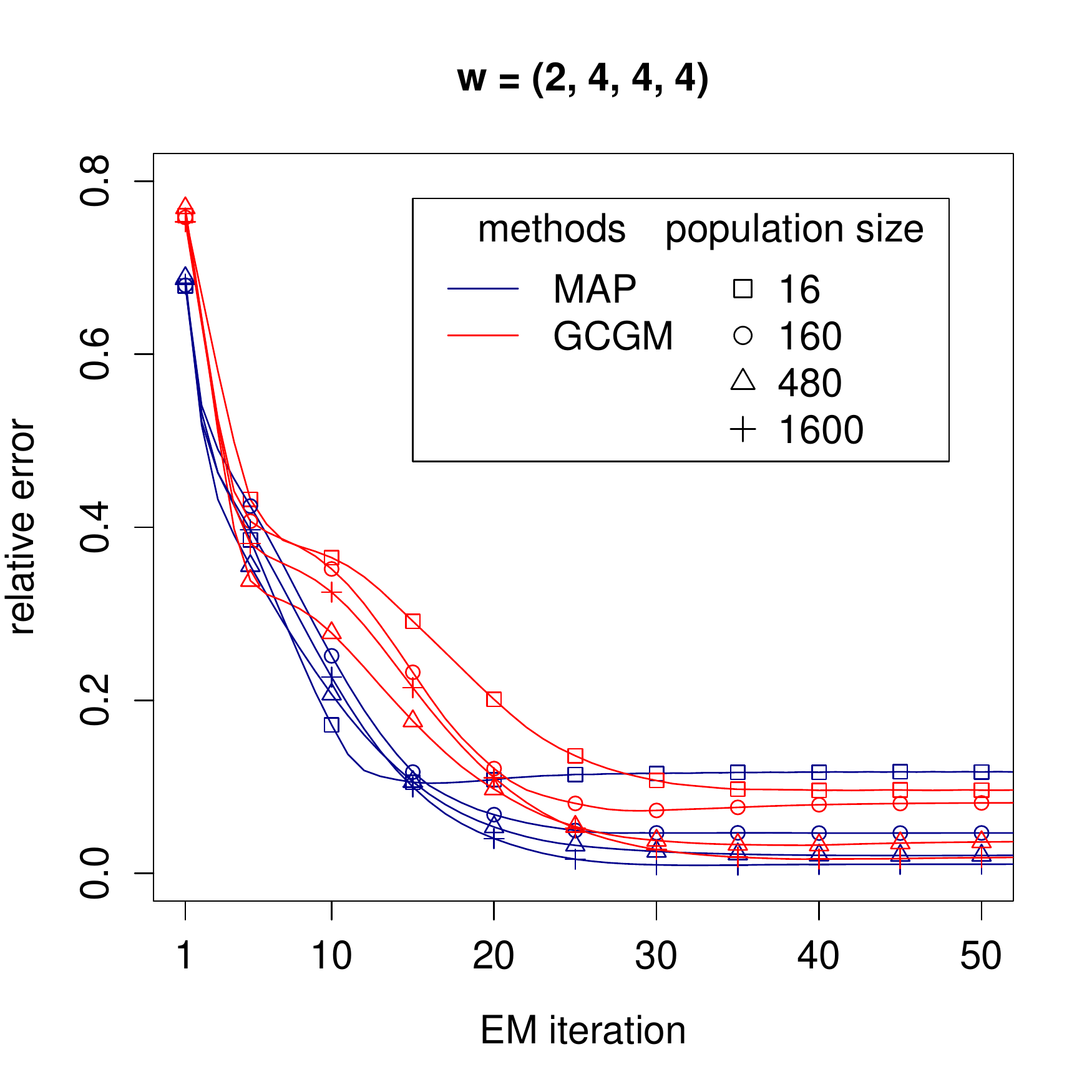}
\end{tabular}
\caption{EM convergence curve different feature coefficient and population sizes}
\label{learncurve}
\end{figure*}

Our third inference experiment explores the effect of varying the size
of the map. This increases the size of the domain for each of the
random variables and also increases the number of values that must be
estimated (as well as the amount of evidence that is observed).  We
vary $L\in \{16, 25, 36, 49\}$. We scale the population size
accordingly, by setting $N = 30 L$.  We use the coefficient vector
$\mathbf{w}_1$.  The results in Table~\ref{varcell} show that for the
smallest map, both methods give similar results. But as the number of
cells grows, the relative error of the MAP approximation grows rapidly
as does the variance of the result. In comparison, the relative error
of the GCGM method barely changes.

\begin{table}[ht]
\caption{Relative inference error with different map size}
\label{varcell}
\centering
\scalebox{0.85}{
\begin{tabular}{ccccc}
  \hline
 $L = $ & 16 & 25 & 36 & 49 \\ 
  \hline
  MAP(N) & .011$\pm$.005 & .025$\pm$.007 & .064$\pm$.012 & .113$\pm$.015 \\ 
  MAP(E) & .013$\pm$.004 & .056$\pm$.012 & .166$\pm$.027 & .297$\pm$.035 \\ 
  \hline
  GCGM(N) & .017$\pm$.003 & .017$\pm$.003 & .017$\pm$.003 & .020$\pm$.003 \\ 
  GCGM(E) & .024$\pm$.002 & .027$\pm$.003 & .034$\pm$.003 & .048$\pm$.005 \\ 
\hline
\end{tabular}}
\end{table}

We now turn to measuring the relative accuracy of the methods during
learning.  In this experiment, we set $L=16$ and vary the population
size for $N \in \{16, 160, 480, 1600\}$.  After each EM iteration, we
compute the relative error as $||\mathbf{w}_{learn} -
\mathbf{w}_{true}||_1 / ||\mathbf{w}_{true}||_1$, where
$\mathbf{w}_{learn}$ is the parameter vector estimated by the learning
methods and $\mathbf{w}_{true}$ is the parameter vector that was used
to generate the data.  Figure~\ref{learncurve} shows the training
curves for the three parameter vectors $\mathbf{w}_{0.5},
\mathbf{w}_1,$ and $\mathbf{w}_2$. The results are consistent with our
previous experiments.  For small population sizes ($N=16$ and
$N=160$), the GCGM does not do as well as the MAP approximation. In some
cases, it overfits the data.  For $N=16$, the MAP approximation also
exhibits overfitting.  For $\mathbf{w}_2$, which creates extreme
transition probabilities, we also observe that the MAP approximation
learns faster, although the GCGM eventually matches its performance
with enough EM iterations.

Our final experiment measures the CPU time required to perform
inference.  In this experiment, we varied $L \in \{16, 36, 64, 100,
144\}$ and set $N=100 L$. We used parameter vector $\mathbf{w}_1$.  We
measured the CPU time consumed to infer the node counts and the edge
counts. The MAP method infers the node and edge counts jointly,
whereas the GCGM first infers the node counts and then computes the
edge counts from them.  We report the time required for computing
just the node counts and also the total time required to compute the
node and edge counts.  Figure~\ref{learntime} shows that the running
time of the MAP approximation is much larger than the running time of
the GCGM approximation. For all values of $L$ except 16, the average
running time of GCGM is more than 6 times faster than for the MAP
approximation. The plot also reveals that the computation time of GCGM
is dominated by estimating the node counts. A detailed analysis of the
implementation indicates that the Laplace optimization step is the
most time-consuming.

\begin{figure}
\centering
\includegraphics[width=0.4\textwidth]{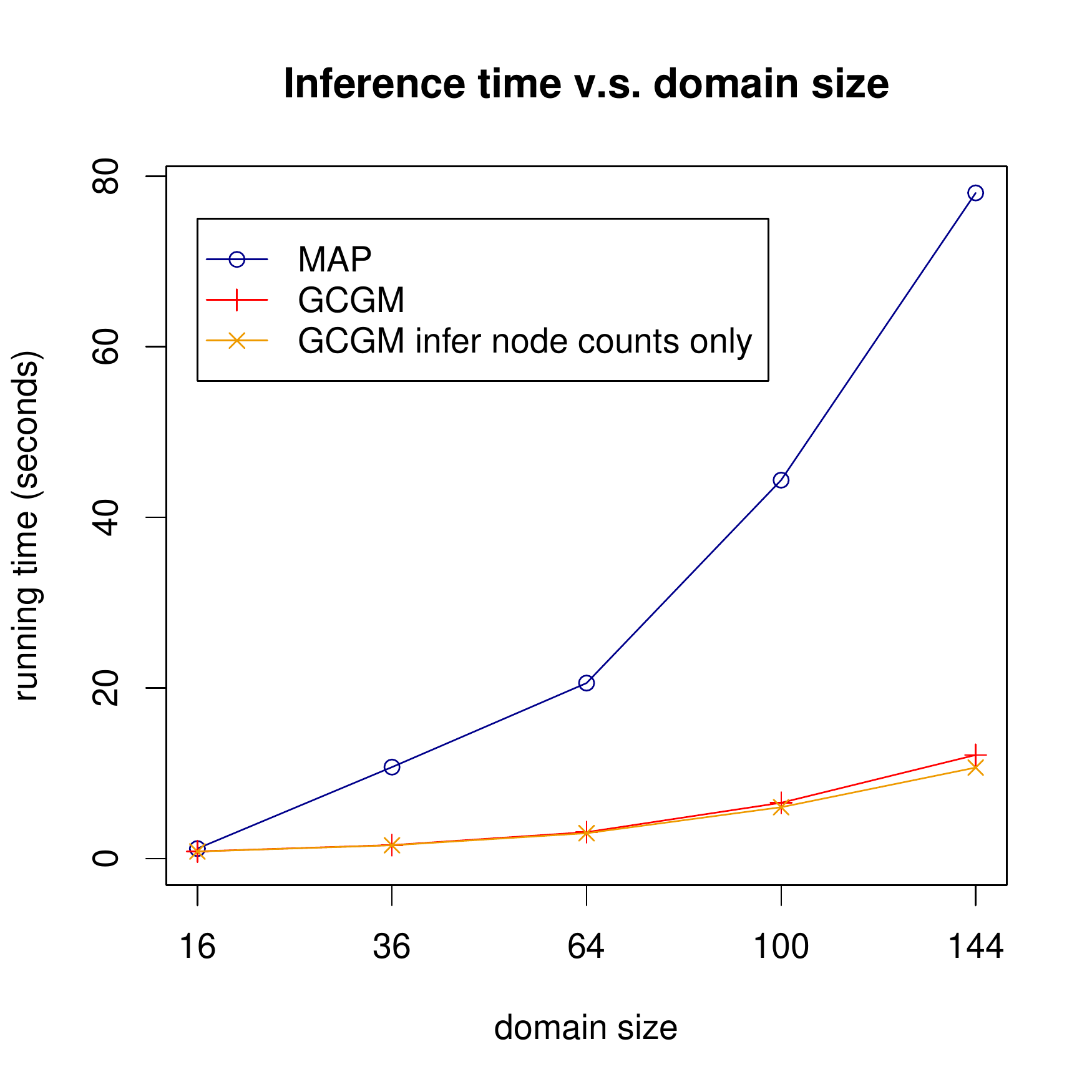}
\caption{A comparison of inference run time with different numbers of cells $L$}
\label{learntime}
\end{figure}

In summary, the GCGM method achieves relative error that matches or is
smaller than that achieved by the MAP approximation.  This is true
both when measured in terms of estimating the values of the latent
node and edge counts and when estimating the parameters of the
underlying graphical model.  The GCGM method does this while running
more than a factor of 6 faster. The GCGM approximation is particularly
good when the population size is large and when the transition
probabilities are not near 0 or 1.  Conversely, when the population
size is small or the probabilities are extreme, the MAP approximation
gives better answers although the differences were not statistically
significant based on only 10 trials.  A surprising finding is that the
MAP approximation has much larger variance in its answers than the
GCGM method.

\section{Concluding Remarks}

This paper has introduced the Gaussian approximation (GCGM) to the
Collective Graphical Model (CGM).  We have shown that for the case
where the observations only depend on the separators, the GCGM is the
limiting distribution of the CGM as the population size $N\rightarrow
\infty$.  We showed that the GCGM covariance matrix maintains the
conditional independence structure of the CGM, and we presented a
method for efficiently inverting this covariance matrix.  By applying
expectation propagation, we developed an efficient algorithm for
message passing in the GCGM with non-Gaussian
observations. Experiments on a bird migration simulation showed that
the GCGM method is at least as accurate as the MAP approximation of
\citet{Sheldon2013}, that it exhibits much lower variance, and that it
is 6 times faster to compute.

\subsection*{Acknowledgement}
This material is based upon work supported by the National
Science Foundation under Grant No. 1125228.

\bibliography{gcgm} 
\bibliographystyle{icml2014}

\clearpage
\appendix
\section{Proof of Proposition~\ref{cgm-distribution}}

The usual way of writing the CGM distribution is to replace
$f(\n;\thetab)$ in Eq.~\eqref{cgm-prob} by
\begin{equation}
\label{fprime}
f'(\n; \thetab) =  
\frac{\prod_{C \in \C, i_C \in \X^{|C|}} \mu_{C}(i_C)^{\n_{C}(i_C)}}
{\prod_{S \in \S, i_S \in \X^{|S|}} \big(\mu_{S}(i_S)^{\n_{S}(i_S)}\big)^{\nu(S)}}
\end{equation}
We will show that $f(\n; \thetab) = f'(\n;\thetab)$ for any $\n$ such
that $h(\n) > 0$ by showing that both descibe the probability of an
ordered sample with sufficient statistics $\n$. Indeed, suppose there
exists some ordered sample $\b{X}
= (\x^{1}, \ldots, \x^{N})$ with sufficient
statistics $\n$. Then it is clear from inspection of
Eq.~\eqref{cgm-prob} and Eq.~\eqref{fprime} that $f(\n; \thetab)
= \prod_{m=1}^N p(\x^{m}; \thetab) = f'(\n; \thetab) $ by the junction
tree reparameterization of $p(\x; \thetab)$
\cite{wainwright2008graphical}. It only remains to show that such an
$\b{X}$ exists whenever $h(\n) > 0$. This is exactly what was shown by
\citet{Sheldon2011}: for junction trees, the hard
constraints of Eq.~\eqref{cgm-base-measure}, which enforce local
consistency on the integer count variables, are equivalent to the
global consistency property that there exists some ordered sample
$\b{X}$ with sufficient statistics equal to $\n$. (Since these are
integer count variables, the proof is quite different from the similar
theorem that local consistency implies global consistency for marginal distributions.)
We briefly note two interesting corollaries to this argument. First,
by the same reasoning, \emph{any} reparameterization of $p(\x;
\thetab)$ that factors in the same way can be used to replace $f(\n;
\thetab)$ in the CGM distribution.
Second, we can see that the base measure $h(\n)$ is exactly the
\emph{number of different ordered samples} with sufficient statistics
equal to $\n$. 

\section{Proof of Theorem~\ref{main-theorem}: Additional Details}
\newcommand{\etab}{\boldsymbol{\eta}}
Suppose $\{\n^N\}$ is a sequence of
random vectors that converge in distribution to $\n$, and
that $\n^N_A$, $\n^N_B$, and $\n^N_S$ are subvectors that
satisfy 
\begin{equation}
\label{ci}
\n^N_A \indep \n^N_B \mid \n^N_S
\end{equation}
for all $N$.
Let $\alpha$, $\beta$, and $\gamma$ be measurable sets in the appropriate
spaces and define
\begin{align}
\label{z}
z &= \Pr(\n_A \in \alpha, \n_B \in \beta \mid \n_S \in \gamma) - \\
& \hphantom{{} = 1} \Pr(\n_A \in \alpha \mid \n_S \in \gamma) \Pr(\n_B \in \beta \mid \n_S\in \gamma) \notag 
\end{align}
Also let $z^N$ be the same expression but with all instances of $\n$
replaced by $\n^N$ and note that $z^N = 0$ for all $N$ by the assumed
conditional independence property of Eq.~\eqref{ci}. Because the
sequence $\{\n^N\}$ converges in distribution to $\n$, we have
convergence of each term in $z^N$ to the corresponding term in $z$,
which means that
\[
z = \lim_{N \rightarrow \infty} z^N = \lim_{N \rightarrow \infty} 0 = 0,
\]
so the conditional independence property of Eq.~\eqref{ci} also holds
in the limit.

\section{Proof of Theorem~\ref{prop-tz}: Linear Function from \texorpdfstring{$\tI$}{tilde I} to 
\texorpdfstring{$\mathbf{I}$}{I}}

We need to show $\mathbf{I}_A$ can be recovered from $\tI_{A^+}$ 
with a linear function. 

Suppose the last indicator variable in $\mathbf{I}_A$ is $i^{0}_A$, which corresponds to the setting
that all nodes in $A$ take value $L$. Let $\mathbf{I}'_A$ be a set of indicators which contains
all entries in $\mathbf{I}_A$ but the last one $i^{0}_A$. Then $\mathbf{I}_A$ can be recovered 
from $\mathbf{I}'_A$ by the constraint $\sum_{i_A} \mathbf{I}_A(i_A) = 1$.  

Now we only need to show that $\mathbf{I}'_A$ can be recovered from $\mathbf{I}_{A^+}$ linearly. 
We claim that there exists an invertible matrix $\mathbb{H}$ such that 
$\mathbb{H} ~ \mathbf{I}'_A =  \tI_{A^+}$. 

Showing the existence of $\mathbb{H}$. Let $\tI_{A^+}(i_D)$ be the $i_D$ entry of $\tI_{A^+}$, 
which is for configuration $i_D$ of clique $D, D \subseteq A$.  
\begin{eqnarray}
\tI_{A^+}(i_D) &=& \sum_{i_{A \backslash D}} \mathbf{I}'_{A}(i_D, i_{A \backslash D})
\end{eqnarray}
Since no nodes in $D$ take value $L$ by definition of $\tI_D$, $(i_D, i_{A \backslash D})$  
{\em cannot} be the missing entry $i^0_A$ of  $\mathbf{I}'_{A}$, and the equation is always valid.  

Showing that $\mathbb{H}$ is square. For each $D$, there are $(L - 1)^{|D|}$ entries, and 
$A$ has $\binom{|A|}{|D|}$ sub-cliques with size $|D|$. So $\tI_{A^+}$ have overall 
$L^{|A|} - 1$ entries, which is the same as $\mathbf{I}'_{A}$. So $\mathbb{H}$ is a square matrix. 

We view $\mathbf{I}'_A$ and $\tI_{A^+}$ as matrices and each row is a indicator 
function of graph configurations. Since no trivial linear combination of $\tI_{A^+}$
is a constant by the conclusion in \citet*{loh2013structure}, $\tI_{A^+}$ has linearly 
independent columns. Therefore, $\mathbb{H}$ must have full rank and $\mathbf{I}'_{A}$  must have 
linearly independent columns.

\end{document}